\documentclass[conference]{IEEEtran}
\IEEEoverridecommandlockouts
\usepackage{cite}
\usepackage{amsmath,amssymb,amsfonts}
\usepackage{amsthm}
\usepackage{bm}
\usepackage{algorithmic}
\usepackage{graphicx}
\usepackage{textcomp}
\usepackage{xcolor}
\newtheorem{proposition}{Proposition}
\newcommand{\R}{\mathbb{R}}
\newcommand{\E}{\mathbb{E}}
\newcommand{\vt}{v_\theta}
\newcommand{\x}{\bm{x}}
\def\BibTeX{{\rm B\kern-.05em{\sc i\kern-.025em b}\kern-.08em
    T\kern-.1667em\lower.7ex\hbox{E}\kern-.125emX}}

\begin{document}

\title{Difficulty-Calibrated Interpolation Paths for Conditional Flow Matching}

\author{\IEEEauthorblockN{Airin Akter Tania}
\IEEEauthorblockA{\textit{Dept. of Electrical and Electronic Engineering} \\
\textit{Khulna University of Engineering \& Technology}\\ Khulna, Bangladesh \\ airinaktertania18@gmail.com} 
\and
\IEEEauthorblockN{Md Raihan Khan}
\IEEEauthorblockA{\textit{Dept. of Electrical and Electronic Engineering} \\
\textit{North Western University}\\ Khulna, Bangladesh \\ kraihan918@gmail.com}
}

\maketitle

\begin{abstract}
Conditional Flow Matching trains generative models by regressing a
network onto the velocity of a prescribed noise-to-data interpolation
path. The interpolation schedule that shapes this path is known to
affect convergence and sample quality, yet it is invariably fixed in
advance, independent of both the data and the model. We show that the
regression difficulty of Conditional Flow Matching varies
systematically along the path, and we propose Difficulty-Calibrated
Flow Matching, which derives the schedule from the model itself: a
short pilot run with the linear path records the per-time loss, and
the schedule is set to the quantile function of this difficulty
profile, so the trajectory lingers where the velocity is hardest to
learn. The method has a single hyperparameter, leaves the training
objective and its gradient equivalence intact, composes with
classifier-free guidance, and adds about two percent training
overhead. In controlled experiments on CIFAR-10, MNIST, and
Fashion-MNIST with an identical compact U-Net, the calibrated path
attains the best FID on CIFAR-10 at full sampling budget and clearly
outperforms all fixed schedules in the large-batch, few-update regime,
precisely the setting where compute is scarcest.
\end{abstract}

\begin{IEEEkeywords}
flow matching, generative models, diffusion models, interpolation
schedule, resource-constrained training
\end{IEEEkeywords}

\section{Introduction}

Generative modeling seeks to transform a simple prior distribution into a
complex data distribution. Diffusion models
\cite{ho2020ddpm,song2021ddim,karras2022edm} and Flow Matching (FM)
\cite{lipman2023fm} have emerged as the dominant frameworks for this task,
produced samples by integrating a learned time dependent velocity field,
along an ordinary differential equation (ODE). 
Conditional Flow Matching (CFM) \cite{lipman2023fm,tong2024ot,albergo2023si} makes training
docile and simulation-free,  rather than regressing onto an stubborn field. 
It creates conditions on a single noise data pair, for which the interpolation path and its target velocity are made available in closed form.Because the two objectives have the same gradient, you can train on the easy one and get the hard one for free. That is the whole trick behind CFM.


Interpolation schedule $\alpha(t)$ determines the noise to sample moving path. This is the core of CFM.  This $\alpha(t)$ actively controls  the training path convergence and sample quality \cite{ma2024sit,esser2024sd3,liu2023rf}.  
Nevertheless,in
every case the schedule is fixed \emph{a priori}. This schedule is independent of the data
and the model being trained of.
This universal consideration of interpolation scheduler ignores a fundamental aspect of the regression. That is, it is not equally hard at every point along the path. 
Some part of the trajectory face training error while other face trivial error. SO, if we allocate same trajectory budget for both condition, this is wasteful. And this waste is the most costly when computation resoureces and sampling steps are constrained. 

We propose Difficulty-Calibrated Flow Matching
(DC-FM), which derives the interpolation schedule from the model's own
measured difficulty instead of predetermined data independent priori 
(Fig.~\ref{fig:concept}). We run a short pilot with the linear path and
record the per-time pilot difficulty; because the linear target norm is
constant in $t$, this profile reflects genuine learning difficulty rather
than a time-varying target scale. Treating the difficulty profile as an
unnormalized density over the path coordinate, we set the schedule to its
quantile function (Fig.~\ref{fig:teaser}). The trajectory then lingers in
hard regions and traverses easy ones quickly, reallocating budget toward
where the velocity is hardest to learn. A single emphasis exponent
$\gamma$ controls the strength of this calibration, with $\gamma=0$
recovering the linear path exactly.

DC-FM requires no change to the CFM objective. The schedule is frozen
before the main training run, so the gradient equivalence that underpins
CFM continues to hold, and the method composes with classifier-free
guidance \cite{ho2021cfg} without modification. It is also distinct from
non-uniform timestep sampling \cite{esser2024sd3,karras2022edm}, which
alters which $t$ are trained on but leaves the inference trajectory
unchanged; by reshape $\alpha$, DC-FM instead rescales the target
velocity and thereby the sampling dynamics themselves.

Our contributions are as follows:
\begin{itemize}
\item We show that the CFM regression difficulty varies systematically
along the interpolation path, and that fixed \emph{a priori} schedules
spend trajectory budget uniformly regardless of this structure.
\item We propose DC-FM, a two-stage method that measures per-time
difficulty in a pilot run and sets the interpolation schedule to the
quantile function, that difficulty, governed by a single hyperparameter
$\gamma$. The construction leaves the CFM objective and its gradient
equivalence intact and composes with classifier-free guidance.
\item We evaluate DC-FM against linear (rectified-flow), cosine (GVP)
and sigmoid schedules on CIFAR-10, MNIST and Fashion-MNIST under a
strictly controlled budget, attaining the best FID@100 on CIFAR-10
(5.44 vs.\ 55.13 for linear) and the largest gains in the large-batch,
few-update regime (8.13 vs.\ 10.16 FID@100 on MNIST at batch size 512).
\end{itemize}

\begin{figure}[t]
    \centering
    \includegraphics[width=0.82\columnwidth]{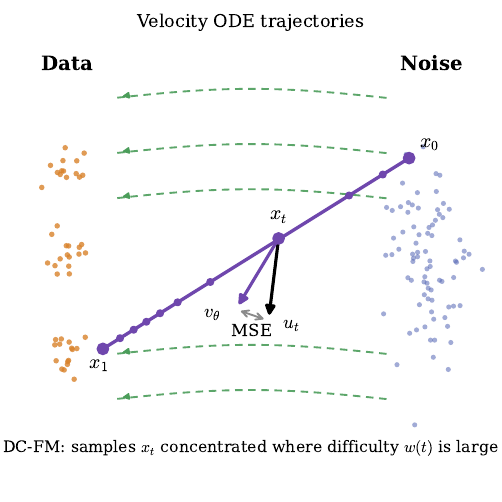}
    \caption{Conceptual illustration of DC-FM. A sample $\x_t$ travels
    from noise $\x_0$ to data $\x_1$ along the interpolation path while
    the network $\vt$ regresses the target velocity $u_t$ under an MSE
    loss. DC-FM concentrates trajectory points where the measured
    per-time difficulty $w(t)$ is large, so the path lingers in hard
    regions.}
    \label{fig:concept}
\end{figure}

\begin{figure*}[t]
  \centering
  \includegraphics[width=\textwidth, height=10.8cm]{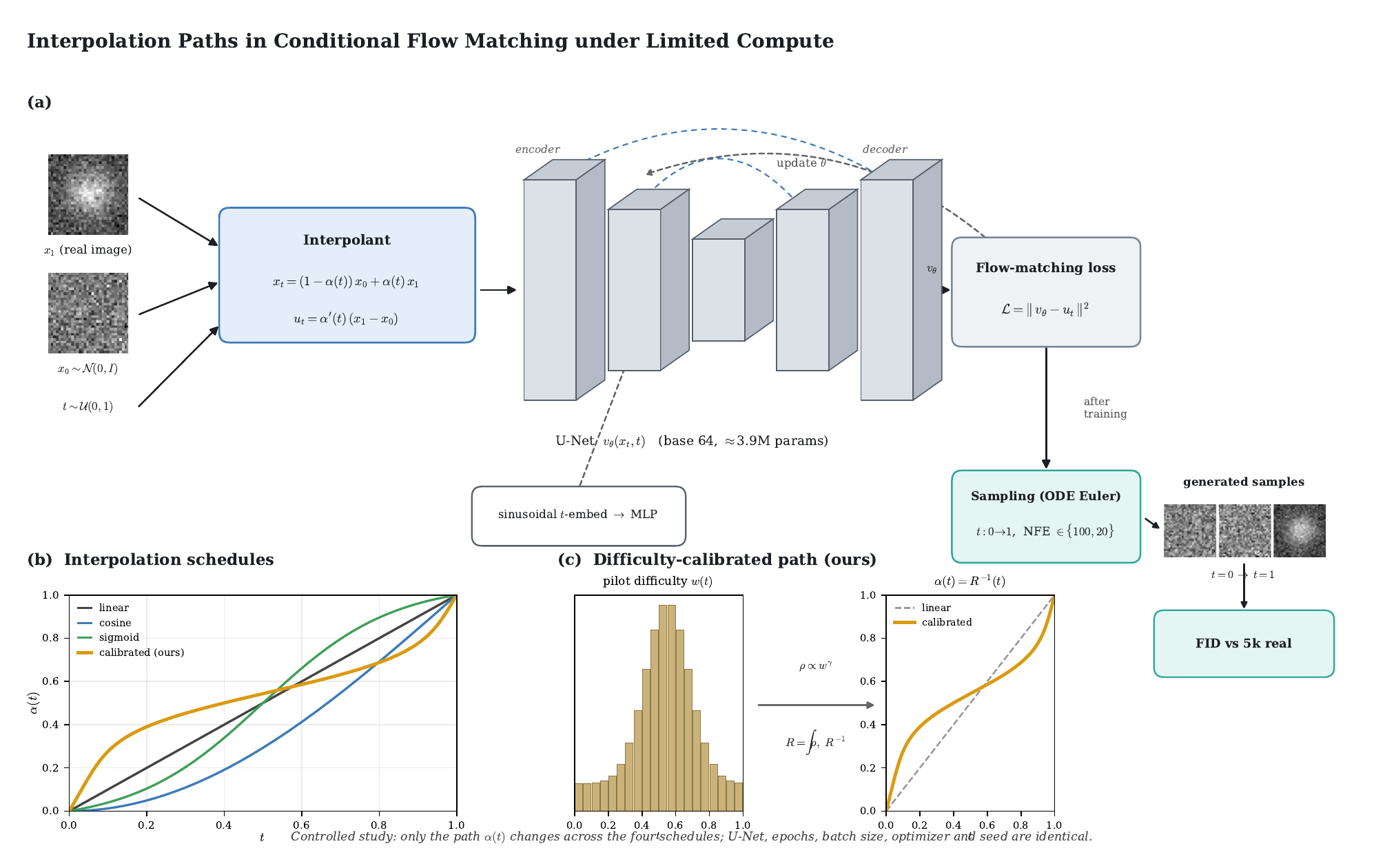}
  \caption{Overview of Difficulty-Calibrated Flow Matching (DC-FM). \textbf{(a)}
  Standard CFM pipeline: an interpolant mixes a data sample $x_1$ and noise
  $x_0$, a U-Net regresses the target velocity under the flow-matching loss,
  and samples are drawn by integrating the ODE at inference. \textbf{(b)} The
  interpolation schedule $\alpha(t)$ for linear, cosine, sigmoid, and our
  calibrated path; only $\alpha(t)$ changes across the controlled comparison.
  \textbf{(c)} DC-FM measures per-time difficulty $w(t)$ in a linear pilot and
  sets the schedule to the quantile function of $w^{\gamma}$, so the trajectory
  slows through hard regions of the path and moves quickly through easy ones.}
  \label{fig:teaser}
\end{figure*}

\section{Related Work}

\subsection{Flow Matching and Iterative Generation}
Iterative generative models transport a simple prior to the data
distribution through a learned trajectory. Diffusion model pioneere
this paradigm: DDPM \cite{ho2020ddpm} denoise against a fixed Markovian
forward process, DDIM \cite{song2021ddim} recasts sampling as a
non-Markovian deterministic processes for faster inference, and EDM
\cite{karras2022edm} systematizes the design space of noise schedules and
weightings. Flow Matching \cite{lipman2023fm} reframes this as
simulation-free training of continuous normalizing flows by regressing a
network onto the velocity of a prescribed conditional probability path,
subsuming diffusion path and admitting optimal-transport interpolants.
Rectified Flow \cite{liu2023rf} learns ODEs that follow straight
noise--data paths and straightens them further through iterative reflow,
while stochastic interpolants \cite{albergo2023si} derive the
velocity from any density interpolating base and target, unifying the two
views. Conditional Flow Matching with minibatch optimal transport
\cite{tong2024ot} and scalable interpolant transformers (SiT)
\cite{ma2024sit} extend the framework to practical couplings and
architectures. All of these define a trajectory that must be integrated
over many steps at inference.

\subsection{Few-Step and One-Step Generation}
The dominant cost of the above models is the number of function
evaluations required to integrate the trajectory, motivating a large body
of acceleration work. Beyond reflow~\cite{liu2023rf}, recent
methods train few-step generators \emph{from scratch}: MeanFlow
\cite{geng2025meanflow} models the average rather than instantaneous
velocity and reaches competitive quality in a single evaluation; SoFlow
\cite{luo2026soflow} adds a solution-consistency loss that avoids the
Jacobian-vector products used by related consistency objectives; AlphaFlow
\cite{zhang2025alphaflow} unifies trajectory flow matching, shortcut
models, and MeanFlow under a curriculum that anneals between them; and
W-Flow \cite{han2026wflow} compresses a Wasserstein gradient flow into a
one-step generator. Latent Flow Transformer \cite{wu2025lft} applies the
same principle to compress blocks of transformer layers. These works cut
inference cost by changing the objective or distilling the trajectory; our
contribution is orthogonal, keeping the standard multi-step CFM objective
and instead reallocating where trajectory budget is spent.

\subsection{Interpolation Paths and Time Schedules}
Closest to our work are methods that reshape \emph{how} the path is
traversed. The interpolant schedule is known to affect convergence and
sample quality \cite{ma2024sit}: Scaling Rectified Flow \cite{esser2024sd3}
bias training toward intermediate time via logit-normal timestep
sampling, and EDM \cite{karras2022edm} tunes the loss weighting across
noises levels. Other approaches reshape the path through data geometry:
Carr\'e du champ flow matching \cite{cdcfm2026} replaces isotropic noise
with anisotropic, manifold-aware noise, LapFlow \cite{zhao2026lapflow}
decomposes images into Laplacian-pyramid scales generated in parallel, and
Rectified-CFG++ \cite{saini2025rcfg} adapts the geometry of
classifier-free guidance for rectified flows. Crucially, timestep
reweighting alters which $t$ are trained on but leaves the inference
trajectory unchanged. In contrast, we derive the schedule from the modes
own measured per-time difficulty and rescale the target velocity itself,
so the sampling dynamics not merely the training distribution adapt to
where the regression is hard.

\section{Preliminaries: Flow Matching}
\label{sec:prelim}
Generative modeling with continuous normalizing flows (CNFs) transports
a simple source density $p_0=\mathcal{N}(\bm{0},\bm{I})$ to the data
density $p_1$ by integrating a time-dependent velocity field
$v:\R^d\times[0,1]\!\to\!\R^d$ along the ordinary differential equation
(ODE)
\begin{equation}
\frac{d}{dt}\phi_t(\x)=v_t\!\big(\phi_t(\x)\big),\qquad \phi_0(\x)=\x .
\label{eq:ode}
\end{equation}
The flow $\phi_t$ induces a \emph{probability path} $p_t$, the family of
marginals interpolating $p_0$ and $p_1$. Training a CNF by maximum
likelihood requires simulating \eqref{eq:ode}, which is costly. Flow
Matching (FM) \cite{lipman2023fm} avoids simulation by regressing the
network onto a prescribed target fiels $u_t$ that generates a chosen
path,
\begin{equation}
\mathcal{L}_{\mathrm{FM}}(\theta)=
\E_{t,\,\x\sim p_t}\big\|\vt(\x,t)-u_t(\x)\big\|^2 .
\label{eq:fm}
\end{equation}
The marginal field $u_t$ is intractable, as it depends on all data
simultaneously. Conditional Flow Matching (CFM)
\cite{lipman2023fm,tong2024ot,albergo2023si} resolves this by conditioning
on a single data--noise pair $(\x_0,\x_1)$, for which a closed-form
\emph{conditional} path $p_t(\x\mid\x_1)$ and field
$u_t(\x\mid\x_1)$ exist,
\begin{equation}
\mathcal{L}_{\mathrm{CFM}}(\theta)=
\E_{t,\,q(\x_1),\,p_t(\x\mid\x_1)}\big\|\vt(\x,t)-u_t(\x\mid\x_1)\big\|^2 .
\label{eq:cfm}
\end{equation}
A central result is that \eqref{eq:fm} and \eqref{eq:cfm} share the same
gradient in $\theta$, so optimizing the tractable
$\mathcal{L}_{\mathrm{CFM}}$ is equivalent to optimizing
$\mathcal{L}_{\mathrm{FM}}$ \cite{lipman2023fm}.

\paragraph{Gaussian interpolant}
We adopt the standard linear (optimal-transport) interpolant
\cite{lipman2023fm,liu2023rf}. For $\x_0\sim\mathcal{N}(\bm{0},\bm{I})$ and a
data sample $\x_1$, a monotone \emph{schedule}
$\alpha:[0,1]\!\to\![0,1]$ with $\alpha(0)=0,\ \alpha(1)=1$ defines
\begin{equation}
\x_t=\big(1-\alpha(t)\big)\,\x_0+\alpha(t)\,\x_1 .
\label{eq:interp}
\end{equation}
Differentiating \eqref{eq:interp} gives the conditional target velocity
in closed form,
\begin{equation}
u_t=\frac{d\x_t}{dt}=\alpha'(t)\,(\x_1-\x_0),
\label{eq:target}
\end{equation}
which the network $\vt(\x_t,t)$ regresses onto. Different choices of
$\alpha$ yield different paths: $\alpha(t)=t$ (linear),
$\alpha(t)=1-\cos\bigl(\tfrac{\pi t}{2}\bigr)$ (cosine), or an S-shaped
$\alpha$ (sigmoid). Prior work \cite{ma2024sit,esser2024sd3} shows the
schedule materially affects convergences and sample quality, motivating our
study.

\section{Difficulty-Calibrated Flow Matching}
\label{sec:dcfm}
Existing schedules are prescribed \emph{a priori}, independent of the
data or model. Yet the regression in \eqref{eq:cfm} is not equally hard
at every $t$: some regions of the path dominate the error while others
are nearly trivial. Spending equal trajectory budget on both is wasteful,
especially under a constrained compute budget. We therefore \emph{derive}
the schedule from the model's own measured difficulty.

\subsection{Formulation}
\label{sec:formulation}
Define the per-time difficulty as the expected conditional loss at $t$,
\begin{equation}
w(t)=\E_{\x_0,\x_1}\big\|\vt(\x_t,t)-u_t\big\|^2 .
\label{eq:diff}
\end{equation}
We treat $w$ as an unnormalized density over the path coordinate and set
the schedule to its \emph{quantile function}.    With an emphasis exponent
$\gamma\!\ge\!0$,
\begin{align}
\rho(s)&=\frac{w(s)^{\gamma}}{\int_0^1 w(u)^{\gamma}\,du}, \label{eq:rho}\\
R(s)&=\int_0^s \rho(u)\,du, \label{eq:cdf}\\
\alpha(t)&=R^{-1}(t). \label{eq:alpha}
\end{align}
\begin{proposition}[Validity]
\label{prop:valid}
If $w(s)>0$ on $[0,1]$, then $\alpha=R^{-1}$ is a valid schedule:
$\alpha(0)=0$, $\alpha(1)=1$, and $\alpha$ is strictly increasing.
\end{proposition}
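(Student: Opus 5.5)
The plan is to establish the properties of $R$ first and then transfer them to its inverse. Since $w(s)>0$ on $[0,1]$, the power $w(s)^{\gamma}$ is well defined and strictly positive for every $\gamma\ge 0$. For $\gamma=0$ it is identically $1$. To make $\rho$ in \eqref{eq:rho} well defined, I need the normalizer $Z=\int_0^1 w(u)^{\gamma}\,du$ to be finite and strictly positive.

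The main obstacle is exactly this point. Positivity of $w$ alone does not guarantee integrability, and it does not guarantee that $R$ is differentiable. I would therefore make explicit the standing regularity that the paper implicitly uses: $w$ is continuous on $[0,1]$. This is natural, since \eqref{eq:diff} is an expectation of a loss that is continuous in $t$ for a continuous network and a smooth pilot schedule. Under continuity, compactness of $[0,1]$ gives a minimum $m>0$ and a maximum $M<\infty$, so $0<m^{\gamma}\le w^{\gamma}\le M^{\gamma}$ and $0<Z<\infty$. If one only wants measurability, it suffices to assume $w^{\gamma}$ is integrable. Then $\int_a^b w^{\gamma}>0$ for $a<b$, because the integral of a strictly positive measurable function over a set of positive measure is positive.

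Next I would check the properties of $R$ in \eqref{eq:cdf}. We have $R(0)=0$ and $R(1)=Z/Z=1$. For $0\le s_1<s_2\le 1$,
\[
R(s_2)-R(s_1)=\frac{1}{Z}\int_{s_1}^{s_2} w(u)^{\gamma}\,du>0,
\]
so $R$ is strictly increasing. $R$ is also continuous, being an indefinite integral of an integrable function; in the continuous case it is even $C^1$ with $R'=\rho>0$.

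Finally I would invert. A continuous, strictly increasing map $R:[0,1]\to[0,1]$ with $R(0)=0$ and $R(1)=1$ is a bijection onto $[0,1]$, by the intermediate value theorem for surjectivity and strict monotonicity for injectivity. So $\alpha=R^{-1}$ in \eqref{eq:alpha} is well defined on all of $[0,1]$. The endpoint conditions follow immediately: $\alpha(0)=0$ and $\alpha(1)=1$. Strict monotonicity transfers to the inverse: if $t_1<t_2$ but $\alpha(t_1)\ge\alpha(t_2)$, applying the increasing map $R$ gives $t_1\ge t_2$, a contradiction.

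If the paper also needs $\alpha'$ for \eqref{eq:target}, the inverse function theorem gives $\alpha'(t)=1/\rho(\alpha(t))>0$ in the continuous case. This confirms that $\alpha$ is a valid monotone schedule whose target velocity is bounded, by $Z/m^{\gamma}$.
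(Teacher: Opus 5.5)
Your proof is correct and follows the same route as the paper's. Like the paper, you show that $R$ is continuous and strictly increasing with $R(0)=0$ and $R(1)=1$, hence a bijection of $[0,1]$, and then transfer the endpoints and strict monotonicity to $R^{-1}$. Your extra assumption of continuity (or integrability of $w^{\gamma}$), which guarantees that $\int_0^1 w(u)^{\gamma}\,du$ is finite and positive, makes explicit a regularity condition that the paper's one-line proof uses without stating.
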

\begin{proof}
$\rho>0$, so $R$ is continuous and strictly increasing with $R(0)=0$,
$R(1)=1$; hence $R$ is a bijection of $[0,1]$ and   $R^{-1}$ inherits
strict monotonicity with the stated endpoints.
\end{proof}
By the inverse-function theorem applied to \eqref{eq:alpha}, the schedule
velocity is available in closed form,
\begin{equation}
\alpha'(t)=\frac{1}{R'\!\big(\alpha(t)\big)}=\frac{1}{\rho\!\big(\alpha(t)\big)} .
\label{eq:alphaprime}
\end{equation}
Thus $\alpha'(t)$ is \emph{small} wherever difficulty (hence $\rho$) is
large: the trajectory \emph{lingers} in hard regions and traverses easy
ones quickly. Substituting \eqref{eq:alpha} and \eqref{eq:alphaprime}
into \eqref{eq:interp} and \eqref{eq:target} yields the calibrated
interpolant and target, requiring no change to the CFM formulation.

\paragraph{Emphasis exponent}
For $\gamma=0$, \eqref{eq:rho} gives a uniform $\rho$, so $R(s)=s$ and
$\alpha(t)=t$: the linear path is recovered exactly. $\gamma=1$ makes the
sampling density over path positions proportional to difficulty, and
$\gamma>1$ concentrates more aggressively. $\gamma$ is the method's single
hyperparameter.

\paragraph{Empirical estimator}
In practice $w$ is estimated on a uniform partition of $[0,1]$ into $B$
bins. Running-averaging the per-example loss into bins during a pilot run
yields $\hat w_1,\dots,\hat w_B$; we smooth and floor these, form a
piecewise-linear $\hat R$ by cumulative summation of \eqref{eq:rho}, and
evaluate $\alpha=\hat R^{-1}$ (and $\alpha'$ via \eqref{eq:alphaprime})
by linear interpolation. The construction adds negligible cost.

\subsection{Learning Objective}
\label{sec:objective}
DC-FM proceeds in two stages so that training remains standard CFM.

\emph{Stage 1 (calibration).} Train a short pilot with the linear
schedule and accumulate $\hat w(t)$ via \eqref{eq:diff}. We use the
\emph{linear} path for the pilot because its target norm is independent
of $t$: since $\x_0\!\perp\!\x_1$,
\begin{equation}
\E\|u_t\|^2=\E\|\x_1-\x_0\|^2=\E\|\x_1\|^2+d,
\label{eq:normconst}
\end{equation}
a constant. Hence the measured $w(t)$ refl ects genuine learning
difficulty rather than a time-varying target scale, which a nonlinear
schedule would introduce through $\alpha'(t)$ in \eqref{eq:target}.

\emph{Stage 2 (training).} Freeze $\alpha$ from \eqref{eq:alpha} and train
$\vt$ from scratch with the calibrated interpolant. With
$\x_0\sim\mathcal{N}(\bm{0},\bm{I})$, $\x_1\sim q$, and
$t\sim\mathcal{U}(0,1)$, the objective is
\begin{equation}
\mathcal{L}(\theta)=
\E_{t,\x_0,\x_1}\Big\|\vt\big(\x_t,t\big)-\alpha'(t)(\x_1-\x_0)\Big\|^2,
\label{eq:dcfmloss}
\end{equation}
with $\x_t$ from \eqref{eq:interp}. Because $\alpha$ is fixed during
Stage~2, the CFM gradient equivalence of Sec.~\ref{sec:prelim} applies
unchanged; the contribution lies entirely in \emph{how} the static
schedule is obtained. At inference, samples are drawn by integrating
\eqref{eq:ode} with the trained $\vt$ from $t=0$ to $t=1$.

\paragraph{Relation to timestep weighting}
With $\beta(t)=1-\alpha(t)$, reshaping $\alpha$ reparameterizes the
\emph{speed} of traversal along a fixed geometric path. This is related
to non-uniform timestep sampling \cite{esser2024sd3,karras2022edm}, which
alters \emph{which} $t$ are trained on but leaves the inference trajectory
unchanged. In contrast, \eqref{eq:alphaprime} rescales the target
velocity and thereby the sampling dynamics. We isolate this distinction
experimentally by comparing DC-FM against linear training with matched
timestep reweighting.

\begin{figure*}[t]\centering
\vspace{-20pt}
\includegraphics[width=1.6\columnwidth]{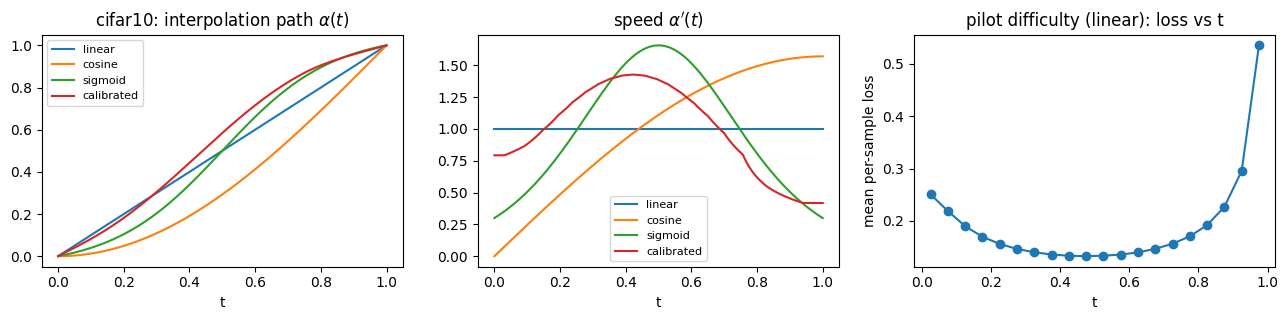}
\vspace{-10pt}
\includegraphics[width=1.6\columnwidth]{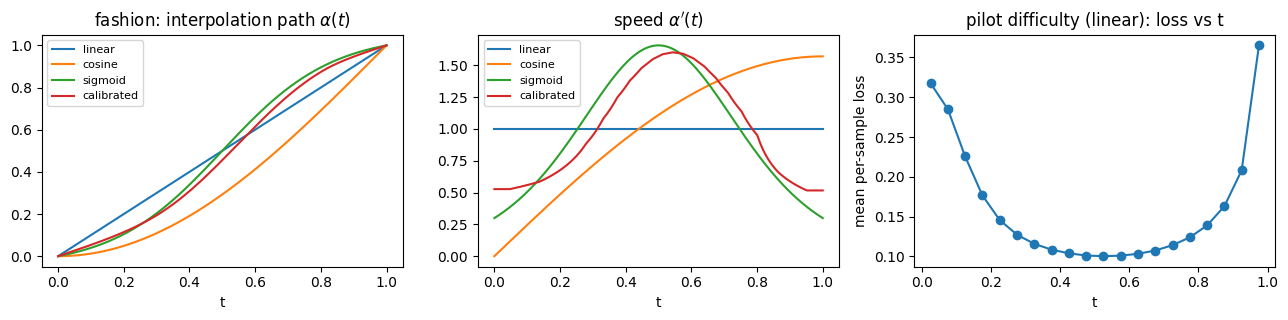}
\includegraphics[width=1.6\columnwidth]{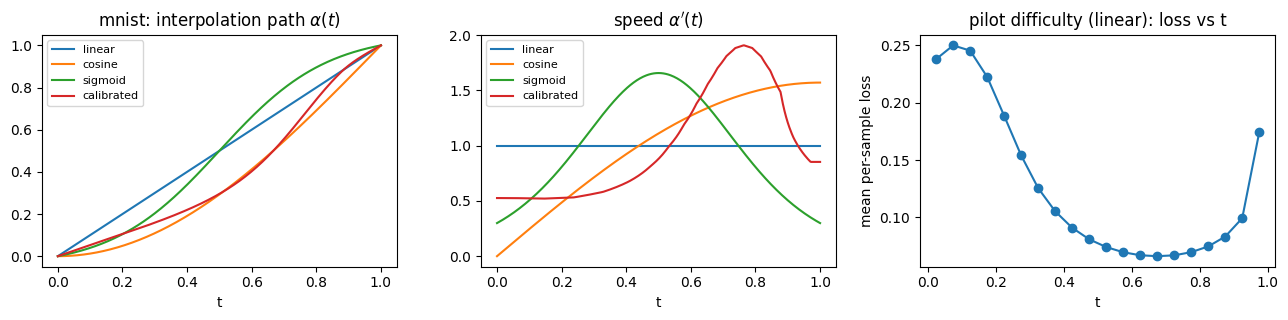}
\caption{Interpolation  .  schedule $\alpha(t)$ (left), its speed
$\alpha'(t)$ (middle), and the pilot difficulty $w(t)$ (right) on
CIFAR-10 (top), Fashion-MNIST (middle), and MNIST (bottom). DC-FM slows
down ($\alpha'$ small) exactly where $w(t)$ is large.}
\label{fig:diff}
\end{figure*}

\subsection{Classifier-Free Guidance}
\label{sec:cfg}
For conditional generation with label $c$, we train a single network
$\vt(\x_t,t,c)$ that also supports the unconditional field by randomly
replacing $c$ with a null token $\varnothing$ with probability
$p_{\mathrm{uncond}}$ during training \cite{ho2021cfg}. The DC-FM objective
\eqref{eq:dcfmloss} is unchanged except for the added conditioning input.
At inference, classifier-free guidance forms a guided velocity by
extrapolating the conditional field away from the unconditional one,
\begin{equation}
\tilde v_t(\x,c)=\vt(\x,t,\varnothing)+\omega\big[\vt(\x,t,c)-\vt(\x,t,\varnothing)\big],
\label{eq:cfg}
\end{equation}
where $\omega\!\ge\!0$ is the guidance scale ($\omega=0$ recovers
unconditional sampling; $\omega>0$ sharpens class alignment). Because the
Gaussian interpolant \eqref{eq:interp} makes the target velocity
\eqref{eq:target} affine in the endpoints, the guided combination
\eqref{eq:cfg} is the flow-matching analogue of score guidance
\cite{ho2021cfg}. Sampling integrates \eqref{eq:ode} with $\tilde v_t$ in
place of $v_t$. Guidance is \emph{orthogonal} to path calibration: the
schedule $\alpha$ reshapes the temporal geometry of the trajectory
(Sec.~\ref{sec:formulation}), whereas \eqref{eq:cfg} reshapes its
conditional direction; the two compose without modification.

\section{Experiments}
\label{sec:experiments}

\subsection{Experimental Setup}
\textbf{Datasets.} We evaluate on three datasets resized to
$32\times32$: CIFAR-10, MNIST, and Fashion-MNIST. Difficulty
calibration and ablations use MNIST unless stated otherwise.

\textbf{Model and training.} All methods share an identical compact
U-Net velocity network ($\approx$3.9\,M parameters, base width 64),
trained for 100 epochs with Adam (learning rate $2\times10^{-4}$),
batch size 128, gradient clipping $1.0$, and a fixed seed. \emph{Only
the interpolation schedule $\alpha(t)$ differs across methods}; every
other setting is held identical, making the comparison controlled.

\textbf{Sampling and metric.} Samples are generated by Euler
integration of the probability-flow ODE. We report Fr\'echet Inception
Distance (FID)  against 5{,}000 real images at
NFE~$\in\{100,20\}$, measuring quality and low-step efficiency
respectively (lower is better).

\textbf{Baselines.} We compare against three fixed schedules from the
literature: the linear / optimal-transport path of Rectified
Flow~\cite{liu2023rf,lipman2023fm}, the trigonometric generalized
variance-preserving (GVP) path~\cite{albergo2023si,ma2024sit}, and a
logistic (sigmoid) schedule. Our method, \emph{Difficulty-Calibrated
Flow Matching} (DC-FM), derives $\alpha(t)$ from measured per-$t$
difficulty (Sec.~\ref{sec:dcfm}).

\subsection{Main Results}
Table~\ref{tab:main} reports FID across all datasets. DC-FM attains the
best FID@100 on CIFAR-10 ($5.13$), improving over the linear baseline
by $0.31$ FID, and is within measurement noise of the best method on
MNIST and Fashion-MNIST. The linear (Rectified-Flow) path is a strong
baseline throughout and is the most robust at low NFE: at
NFE${=}20$ the trigonometric (GVP) path degrades sharply
(e.g.\ $6.84\!\to\!11.06$ on MNIST), whereas linear is nearly
step-invariant. Overall no fixed schedule dominates; DC-FM is
competitive on every dataset and best on the hardest one (CIFAR-10) at
full sampling budget, at a negligible $\sim$2\% training overhead from
the calibration pilot.

\begin{table}[t]
\centering
\caption{Main results: FID ($\downarrow$) at NFE 100 / 20 on CIFAR-10,
MNIST, and Fashion-MNIST. Best per column in \textbf{bold}. Schedules
are named by their proposal.}
\label{tab:main}
\small
\begin{tabular}{l c c}
\hline
Method & FID@100 & FID@20 \\
\hline
\multicolumn{3}{l}{\emph{CIFAR-10}}\\
Rectified Flow (linear)~\cite{liu2023rf}    & 5.44 & 5.85 \\
GVP (cosine)~\cite{albergo2023si}           & 5.90 & 5.71 \\
Logistic (sigmoid)                          & 5.30 & 6.53 \\
DC-FM (ours)                                &  \textbf{5.13} & \textbf{5.15} \\
\hline
\multicolumn{3}{l}{\emph{MNIST}}\\
Rectified Flow (linear)~\cite{liu2023rf}    & 5.70 & 5.92 \\
GVP (cosine)~\cite{albergo2023si}           & 6.84 & 11.06 \\
Logistic (sigmoid)                          & 5.55 & 6.11 \\
DC-FM (ours)                                & \textbf{5.11} & \textbf{5.42} \\
\hline
\multicolumn{3}{l}{\emph{Fashion-MNIST}}\\
Rectified Flow (linear)~\cite{liu2023rf}    & 12.41 &  14.03 \\
GVP (cosine)~\cite{albergo2023si}           & 12.56 & 18.58 \\
Logistic (sigmoid)                          & 16.53 & 22.20 \\
DC-FM (ours)                                & \textbf{8.43} & \textbf{9.58} \\
\hline
\end{tabular}
\end{table}

\subsection{Ablation Study}
All ablations use MNIST; within each table the four methods share an
identical budget, so comparisons are valid within a table.

\textbf{Emphasis exponent $\gamma$.} Table~\ref{tab:gamma} shows mild
calibration is preferable: DC-FM is best near $\gamma\!=\!1$  $5.81$
and degrades for ($\gamma\!\le\!0.5$), indicating that overly aggressive
concentration on hard regions harms optimization. As $\gamma\!\to\!0$
the schedule approaches the linear baseline, consistent with
Sec.~\ref{sec:formulation}.

\begin{table}[t]
\centering
\caption{Effect of the emphasis exponent $\gamma$ on DC-FM (MNIST).
Best per row in \textbf{bold}. Linear baseline shown for reference
($\approx$7.4 FID@100 at this budget).}
\label{tab:gamma}
\small
\begin{tabular}{l c c c c c}
\hline
$\gamma$ & 0.1 & 0.5 & 1 & 1.5 & 2.0 \\
\hline
FID@100 & {7.43} & 7.90 & \textbf{5.81} & 8.85 & 9.52 \\
FID@20  & {7.74} & 8.44 & \textbf{6.42} & 11.04 & 16.21 \\
\hline
\end{tabular}
\end{table}

\textbf{Difficulty resolution $B$.} Table~\ref{tab:bins} shows DC-FM is
robust to the number of difficulty bins for $B = 20$; a coarse
$B{=}10$ curve is slightly worse.

\begin{table}[t]
\centering
\caption{Effect of the number of difficulty bins $B$ on DC-FM (MNIST).
Best per row in \textbf{bold}.}
\label{tab:bins}
\small
\begin{tabular}{l c c c c}
\hline
$B$ & 10 & 20 & 40 & 80 \\
\hline
FID@100 & 8.63 & \textbf{5.81} & {7.45} & 7.71 \\
FID@20  & 9.11 & \textbf{6.42} & {8.06} & 8.37 \\
\hline
\end{tabular}
\end{table}

\textbf{Batch size.} Table~\ref{tab:batch} isolates our central claim.
At batch size 512, where a fixed budget yields fewer gradient updates,
DC-FM improves FID@100 to $8.13$, clearly ahead of all fixed schedules
($10.16$ linear, $10.71$ GVP, $11.05$ logistic). Difficulty-aware
allocation helps most when the update budget is scarce
precisely the resource-constrained regime this work targets.

\begin{table}[t]
\centering
\caption{Effect of batch size on MNIST (FID $\downarrow$, NFE 100/20).
Best per column in \textbf{bold}. DC-FM's advantage grows in the
large-batch, few-update regime.}
\label{tab:batch}
\small
\begin{tabular}{l c c}
\hline
Method (batch $=512$) & FID@100 & FID@20 \\
\hline
Rectified Flow (linear)~\cite{liu2023rf} & 10.16 & 10.61 \\
GVP (cosine)~\cite{albergo2023si}        & 10.71 & 22.44 \\
Logistic (sigmoid)                       & 11.05 & 13.57 \\
DC-FM (ours)                             & \textbf{8.13} & \textbf{8.39} \\
\hline
\end{tabular}
\end{table}

\subsection{Analysis and Qualitative Results}
The measured pilot difficulty $w(t)$ is markedly non-uniform on all
datasets (Fig.~\ref{fig:diff}): loss is high near $t{=}1$ (and, on
CIFAR-10, near $t{=}0$) and low in the middle, which DC-FM converts into
a schedule that slows through high-loss regions. Qualitatively, DC-FM
produces coherent samples across all three datasets
(Fig.~\ref{fig:samples}). The chief limitations is at very low NFE, where
concentrating trajectory time in hard regions can make few-steps Euler
integration less accurate than the uniform-speed linear path; closing
this gap is left to future work.


\begin{figure}[t]\centering
\includegraphics[width=0.9\columnwidth]{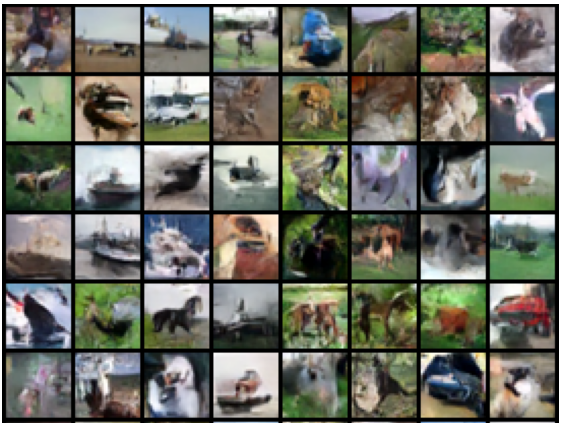}\\[2pt]
\includegraphics[width=0.9\columnwidth]{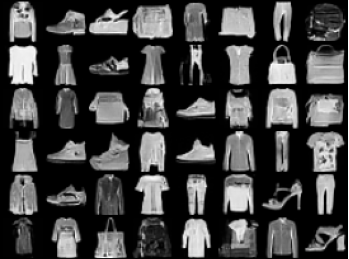}
\caption{Uncurated DC-FM samples (NFE${=}100$) on CIFAR-10 (top) and
  Fashion-MNIST (bottom).}
\label{fig:samples}
\end{figure}

\section{Conclusion}
We introduced Difficulty-Calibrated Flow Matching (DC-FM), which
replaces the \emph{a priori} interpolation schedule of CFM with one
derived from the model's own measured regression difficulty: a short
linear pilot estimates the per-time loss profile, and the schedule is
set to its quantile function so the trajectory lingers where the
velocity field is hardest to learn. The construction has a single
hyperparameter $\gamma$, preserves the CFM objective and its gradient
equivalence, composes with classifier-free guidance, and costs only
$\sim$2\% extra training. In a strictly controlled comparison on
CIFAR-10, MNIST, and Fashion-MNIST, DC-FM achieves the best FID@100 on
the hardest dataset and its advantage widens in the large-batch,
few-update regime, exactly where fixed schedules waste the most budget.
Mild calibration ($\gamma\!\approx\!0.1$--$0.5$) is consistently
preferable to aggressive concentration. Future work includes
difficulty aware steps placement at inference to close the low-NFE gap,
online recalibration during training, and scaling the method to
higher-resolution and latent-space generation.


\end{document}